\DeclareMathOperator*{\argmin}{arg\,min}
\theoremstyle{definition}
\newtheorem{definition}{Definition}
\newtheorem*{definition*}{Definition}
\newtheorem{lemma}{Lemma}
\newcommand{\LE}{\mathcal{L}}
\begin{document}


\title{Generalized Conflict-directed Search for Optimal Ordering Problems}

\author{
    Jingkai Chen,
    Yuening Zhang,
    Cheng Fang,
    Brian C. Williams
    \\
}
\affiliations{
    Massachusetts Institute of Technology \\
    jkchen@csail.mit.edu, zhangyn@mit.edu, cfang@mit.edu, williams@csail.mit.edu
}
\maketitle

\maketitle
\begin{abstract}
\begin{quote}
Solving planning and scheduling problems for multiple tasks with highly coupled state and temporal constraints is notoriously challenging. An appealing approach to effectively decouple the problem is to judiciously order the events such that decisions can be made over sequences of tasks. As many problems encountered in practice are over-constrained, we must instead find relaxed solutions in which certain requirements are dropped. This motivates a formulation of optimality with respect to the costs of relaxing constraints and the problem of finding an optimal ordering under which this relaxing cost is minimum. In this paper, we present Generalized Conflict-directed Ordering (GCDO), a branch-and-bound ordering method that generates an optimal total order of events by leveraging the generalized conflicts of both inconsistency and suboptimality from sub-solvers for cost estimation and solution space pruning. Due to its ability to reason over generalized conflicts, GCDO is much more efficient in finding high-quality total orders than the previous conflict-directed approach CDITO. We demonstrate this by benchmarking on temporal network configuration problems, which involves managing networks over time and makes necessary tradeoffs between network flows against CDITO and Mixed Integer-Linear Programing (MILP). Our algorithm is able to solve two orders of magnitude more benchmark problems to optimality and twice the problems compared to CDITO and MILP within a runtime limit, respectively.
\end{quote}
\end{abstract}

\section{Introduction}

In order to plan for many real-world problems, autonomous systems are required to take into account the requirements over timing and system states. This category of problems ranges from the classical job shop scheduling problems \cite{manne1960job} to hybrid planning problems for multiple tasks with coupled state and temporal constraints \cite{wang2015tburton}. The key to this body of work has been to abstract the tasks, which are then ordered and checked against state and temporal requirements. With a consistent total order, these abstracted tasks are then refined into more concrete courses of actions by resource managers or schedulers. The choice of ordering algorithms is particularly important. A good ordering algorithm should prune unhelpful orderings as much as possible to avoid the computationally expensive checks of the state and temporal consistency. Recent work demonstrates how generalizing inconsistent orderings of events through the interaction with sub-solvers can greatly accelerate this ordering procedure while exploring a special total order tree \cite{wang2015,chen2019efficiently}.

However, the problems specified by these abstract tasks or non-practitioner users are often over-constrained and contain requirements drawing on competing resources. A key challenge to solve such problems is to provide high-quality relaxed solutions in which some requirements are dropped in order to meet hard constraints representing the environment characteristics or other higher-priority requirements. This motivates a notion of optimality with respect to constraint relaxation, as an extension to previous approaches that only considered orderings for constraint satisfaction \cite{wang2015,chen2019efficiently}. While recent work has addressed optimal constraint relaxation problems purely for temporal constraints \cite{yu2013continuously}, we aim to develop an algorithm that can interact with various underlying solvers such that optimal relaxation problems with tightly-coupled state and temporal constraints can be tackled by solving an optimal ordering problem.

In this paper, we introduce Generalized Conflict-directed Ordering (GCDO), an ordering algorithm that generates optimal total orders of the start and end events of abstract tasks. The optimality is defined with respect to a set of constraints with relaxing costs.  GCDO first starts with a total order of these events and then incrementally changes partial orders in a branch-and-bound (B\&B) manner, during which inconsistency or suboptimality discovered by sub-solvers are summarized as bounding constraints to render cost estimation and solution space pruning.

Our optimal ordering algorithm is based on the well-known B\&B search method \cite{lawler1966branch}. The basic idea of B\&B is implicitly enumerating all the solutions and pruning suboptimal subtrees along the way. With different branching and bounding rules, B\&B has been used to solve a wide range of optimization problems. From the perspective of B\&B, as we arrange all the total orders in a special tree \cite{ono2005constant}, we branch by exploring subtrees whose total orders share some common partial orders. Then, we bound and prune the subtrees whose costs are provably suboptimal, given the shared partial orders. 

To estimate the cost of total orders for pruning suboptimal ones, we draw inspiration from the idea of Conflict-directed Incremental Total Ordering (CDITO), which extracts conflicting partial orders from sub-solvers to guide the search \cite{chen2019efficiently}. We further extend these conflicting partial orders to account for costs, and thus the search is able to estimate total order costs without using sub-solvers and directly jump over suboptimal subtrees. The idea of interacting with sub-solvers is also similar in spirit to Satisfiability Module Theories (SMT) solvers such as Z3 \cite{de2008z3} or Optimization Modulo Theories (OMT) solvers such as OptiMathSAT \cite{sebastiani2020optimathsat} while we are using sub-solvers to find relaxations with respect to suboptimality instead of Satisfiability.


\section{Motivating Example}\label{section:example}
Consider a network configuration problem in which we need to schedule and route four network flows with different priorities and allocate bandwidth resources of a network. In this network, the links have different characteristics of loss, delay, and bandwidth capacity, as shown in Figure~\ref{fig:example_topology}.

\begin{figure}[ht]
\centering
\includegraphics[clip,width=0.4\columnwidth]{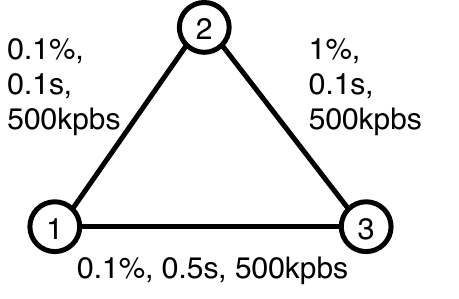}
{\caption{Network topology and statistics.}
\label{fig:example_topology}}
\end{figure}

Table~\ref{tab:example_flow} gives the mission specifications of these four flows on source nodes, destination nodes, maximal loss, maximal delay, and minimal required throughput. A detailed description of these specifications can be found in \cite{chen2018radmax}. The allowable duration lengths for all the flows are [30, 60] seconds. There are also priorities associated with these flows. As Flow-A and Flow-D are very important, they must be transferred. The costs for dropping Flow-B and Flow-C are 3 and 5, respectively. The mission also has temporal requirements: (1) Flow-B and Flow-C should start at the same time, and their ends should be at least 20 seconds apart; (2) Flow-A and Flow-D should start and end at the same times; (3) Flow-B and Flow-C should finish before Flow-A and Flow-D end; (4) we prefer the whole mission to finish in 70 seconds; if the mission takes longer than required, cost 1 is incurred. 

\begin{table}[ht]
\scriptsize
\centering
\begin{tabular}{|c||c|c|c|c|c|c|}
 \hline
 Flow & \ Cost \ & Source & Sink & \  Loss \ & \ Delay \ & Throughput \\
 \hline
 A & $\infty$ & 1 & 2 & 0.5\% & 1s & 200kbps \\  \hline
 B & 5 & 1 & 2 & 3\% & 1s & 360kbps \\  \hline
 C & 3 & 1 & 2 & 3\% & 0.3s & 360kbps \\  \hline
 D & $\infty$ & 1 & 2 & 3\% & 1s & 360kbps \\  \hline
\end{tabular}
{\caption{Mission specifications of flows.}
\label{tab:example_flow}}
\end{table}

A total order of the flow starts and ends with the minimal cost and its corresponding routes are given in Figure~\ref{fig:example_soln}. We also show the temporal constraints of the example in Figure~\ref{fig:example_soln}. It can be easily verified that such a plan only violates the fourth temporal requirement with cost 1. Now we prove this plan is optimal. Given the limited bandwidth constraints, a network link can transfer at most one flow simultaneously. By checking the loss and delay, we know the only feasible route of Flow-A and Flow-C is Path 1-2, while Flow-B and Flow-D can take either Path 1-2 or Path 1-3-2. From the temporal requirements, we know Flow-B and Flow-C must be concurrent, and Flow-A and Flow-D must be concurrent. Furthermore, these two clusters cannot be concurrent, given the limited network capacity, which leads to an 80-second horizon and violates the fourth requirement. Therefore, there is no plan to satisfy all the requirements, and the plan in Figure~\ref{fig:example_soln} that only violates one constraint with cost 1 is optimal.

As we can see, this problem involves managing network flows with different priorities and multiple characteristics, which is hard, especially when multiple flows are considered over a large network \cite{chen2018radmax}. The problem becomes much harder when an exponential number of relaxation choices are considered to minimize total costs. We show that our ordering method can efficiently find an optimal solution for this problem by interacting with proper sub-solvers.

\begin{figure}[t]
\centering
\includegraphics[width=0.9\columnwidth]{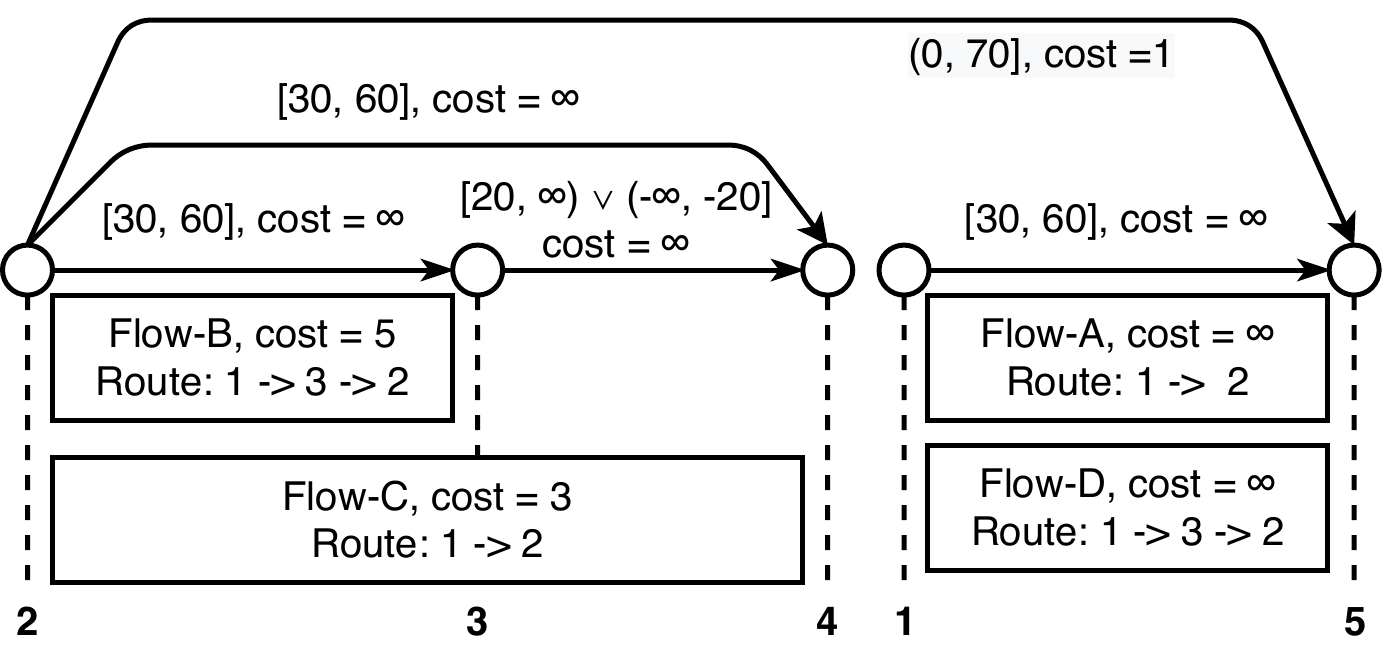}
\caption{Optimal solution of the motivating example.}
\label{fig:example_soln}
\end{figure}

\section{Problem Formulation}
\label{sec:def}
Based on the definitions of general ordering problems \cite{chen2019efficiently}, we associate each constraint with a real-valued cost or an infinite cost, which is similar to the valued constraint satisfaction problems \cite{schiex1995valued}, and the optimal ordering problem is defined as a tuple $\langle E, \Phi, w, h \rangle$:
\begin{itemize}
    \item $E$ is a set of $n$ events represented by the natural numbers $\{1,2,..,n\}$.
    
    \item $\Phi$ is a set of constraints, consisting of a set of ordering constraints $\Phi^+$ and a set of theory constraints $\Phi^*$. An ordering constraint $\phi^+ \in \Phi^+$ is a disjunction of partial orders. A partial order $(a \prec b)$ constrains $a \in E$ to precede $b \in E$. A theory constraint is a user-defined state or temporal requirements across a set of events over time.
    
    \item  $w: \Phi \rightarrow \mathbb{R}_+ \cup \{\infty\}$ is a function that maps a constraint $\phi \in \Phi$ to a positive cost value $g(\phi) $. If $w(\phi) = \infty$, $\phi$ is a hard constraint; otherwise, $\phi$ is a soft constraint.

    \item $h: \LE \times  2^\Phi \rightarrow \{\top, \bot\}$ is a function that maps a total order $\LE$ of $E$ and a set of constraints $\Phi' \subseteq \Phi$ to a Boolean value indicating the consistency of $\LE$ with respect to $\Phi'$.
    
\end{itemize}

 A candidate solution of this ordering problem is a total order $\LE$ that is a sequence of all the events of $E$. Note that we do not allow events to co-occur, and thus we require a strict ordering such that $\lnot (a \prec b) = (b \prec a)$. 

To define the solution consistency and optimality, we first introduce the notions of constraint relaxation and the cost of total orders with respect to its constraint relaxation. A set of constraints $\Phi' \subseteq \Phi$ is a relaxation of $\Phi$ under total order $\LE$ if $h(\LE, \Phi / \Phi') = \top$ (i.e., $\LE$ is consistent with the rest of the constraints). A trivial relaxation under any total order is $\Phi$, which means suspending all the constraints. A relaxation $\Phi^*$ under $\LE$ is an optimal relaxation if $\sum_{\phi \in \Phi^*} w(\phi) \leq \sum_{\phi \in \Phi'} g(\phi)$ holds for any relaxation $\Phi'$ under $\LE$ (i.e., the cost sum of optimal relaxations is minimum). Formally, the cost of $\LE$ is denoted as $g(\LE)$ and defined as 

$${\scriptsize
g(\LE) = \min_{(\Phi' \subseteq \Phi) \land h(\LE, \Phi / \Phi') } \sum_{\phi \in \Phi'} w(\phi)}.$$
Note that, $g(\LE)$ is in the form of $k\infty + c$, where $k$ is a non-negative integer representing the total number of the relaxed hard constraints and $c$ is a real-valued number representing the cost sum of the relaxed soft constraints. We treat $(k\infty+c)$ as finite if $k=0$.

A candidate solution $\LE$ is a solution if and only if its cost $g(\LE)$ is finite. A solution $\LE$ is an optimal solution if and only if $g(\LE) \leq g(\LE')$ holds for any solution $\mathcal{L'}$. 

We assume the cost evaluation function $g$ is provided, which can return the cost of a total order in terms of its optimal relaxation of $\Phi$ given consistency function $h$. 


Our motivating example can be formulated as follows:
\begin{itemize}
    \item $E = \{1,2,3,4,5\}$  as shown in Figure~\ref{fig:example_soln}.
    
    \item $\Phi^+ = \{ o_1, o_2, o_3, o_4, o_5 \}$: $o_1 = (1 \prec 5)$, $o_2 = (2 \prec 3)$, and $o_3 = (2 \prec 4)$ constrain each flow's start to precede its end; $o_4 = (3 \prec 5)$ and $o_5 = (4 \prec 5)$ captures temporal requirement (3). All of these constraints are hard constraints with cost $\infty$.
    
    \item $\Phi^* = \{ t_{1}, t_2, t_3, t_4, t_5\} \cup \{ s_1, s_2, s_3, s_4 \}$: $t_1$, $t_2$ and $t_3$ constrain the duration of each flow to be within $[30, 60]$; $t_4$ and $t_5$ captures temporal requirements (1) and (4), respectively; $\{ s_1, s_2, s_3, s_4 \}$ represents the state constraints on routing, loss, delay, and bandwidth of each flow. All of these constraints are hard constraints except $w(t_5) = 1$, $w(s_2) = 5$, and $w(s_3) = 3$.
    
    \item $h$ is able to take as input a total order and determine whether there exists a valid plan,  which specifies flow routing, bandwidth allocation, and schedules, satisfies the given set of constraints and respects this total order. To implement $g$ for this domain, we use the network configuration manager in \cite{chen2018radmax} and the optimal temporal network relaxation solver in \cite{yu2013continuously} to optimize the cost with respect to the state and temporal constraints, respectively. 
    
\end{itemize}

One optimal solution of the above problem is $23415$ with cost $1$ by relaxing temporal constraint $t_5$.

\begin{figure*}[ht]
\vspace{-2pt}
\centering
\includegraphics[width=1.99\columnwidth]{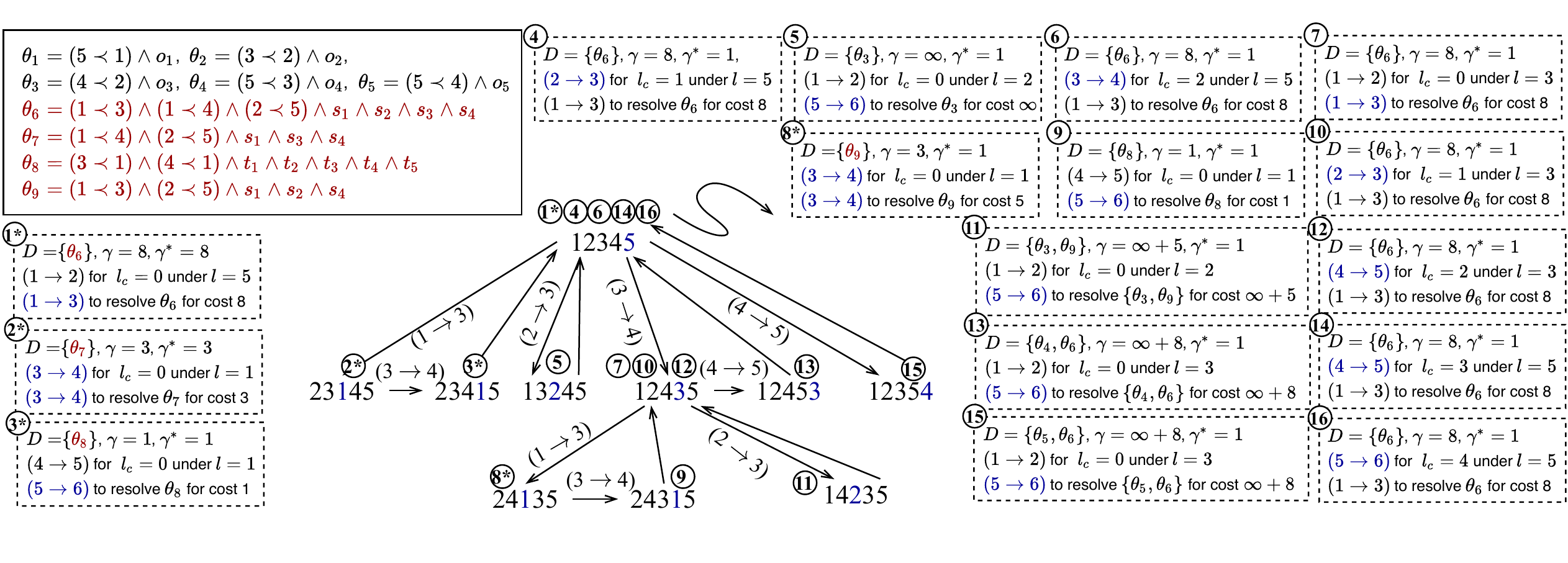}
\vspace{-10pt}
\caption{The explored total orders when solving the motivating example by using GCDO. The levels of total orders  and the chosen next order move are in blue; all the bounding constraints are in the solid-line box; the bounding constraints that are extracted by $f$ during the search are in red; the manifested disjoint bounding constraints $D$, estimation cost $\gamma$, incumbent cost $\gamma^*$, the standard order move, and the first reducing order move at each iteration are given in the dotted-line boxes; we add $*$ to the iteration number if the exact cost is queried from the sub-solvers in that iteration.}
\label{fig:gcdo}
\vspace{-10pt}
\end{figure*}

\section{Approach}
\label{sec:approach}

In this section, we present the design and implementation of the GCDO algorithm, which adopts the well-known branch-and-bound (B\&B) search to systematically explore all the total orders of events and find an optimal ordering solution in an anytime manner. One core idea of B\&B is to bound and prune suboptimal solutions, which requires (1) estimating the objective bounds of subsets of solution candidates; and (2) pruning the suboptimal subsets given these estimations. 

We leverage the following two ideas to pave the way for cost estimation and pruning: 
(1) a well-defined tree structure for enumerating the set of total orders \cite{ono2005constant}, such that total orders in a subtree have partial orders in common;
(2) bounding constraints that summarize the cost of satisfying a set of partial orders and constraints.
With this tree structure and the bounding constraints, we can estimate the cost bound of a total order or all the total orders in a subtree by summing up the costs of their manifested bounding constraints. By using bounding constraints to estimate costs, GCDO avoids many expensive queries about the exact costs of total orders from sub-solvers. Moreover, given the cost estimation of subtrees, we can safely prune inconsistent or suboptimal ones by directly jumping over them.

By using GCDO, the optimal total order of our motivating example can be found in the third iteration as given in Figure~\ref{fig:gcdo}. GCDO starts with total order $12345$, in which all the flows transfer concurrently and leads to total order cost $8$. Then, it identifies that by moving event $1$ after event $3$, the next total order $23145$ resolves this concurrency and reduces the cost by $8$. GCDO then checks $23145$ and finds the concurrency of Flow-A, Flow-C, and Flow-D leads to cost $3$. By moving $1$ after $4$, this concurrency is resolved, and we end up with the optimal solution $23415$ with cost $1$, which only needs to relax the overall makespan constraint $t_5$. These concurrencies are examples of bounding constraints, which are partial orders and constraints that impose a certain cost when satisfied. Then, GCDO takes another thirteen iterations to prove this total order is optimal, during which GCDO mainly uses the bounding constraints to estimate costs and only calls cost evaluation function $g$ once.

GCDO (Algorithm~\ref{alg:gcdo}) takes as input the total number of events $n$, a set of ordering constraints $\Phi^+$, a cost evaluation function $g$, and a bounding constraint extraction function $f$. GCDO outputs either $(\{\}, \infty)$ or an optimal total order $\LE^*$ along with its cost $\gamma^*$ with respect to  $g$ (Line~\ref{line:return}). Total order $\LE$ is initialized as the root total order $(1,2,..,n)$, and the search status $l_c$ is set to $0$ (Line~\ref{line:init1}). Then, Line~\ref{line:init2} initializes the incumbent total order $\LE^*$, the incumbent cost $\gamma^*$, and the extracted bounding constraints $\Theta$. Starting from $(1,2,..,n)$, the algorithm explores all the total orders in a systematic way and updates the incumbents when better solutions are found (Line~\ref{line:updateInc}). We provide a high-level explanation to the pseudo-code in the following four paragraphs and their implementation details are introduced in the rest of Section~\ref{sec:approach}.

\subsubsection{Total Order Search} Our algorithm follows a systematic search strategy in the total order tree introduced by \cite{ono2005constant}. In each iteration, GCDO only maintains one total order $\LE$ and its search status $l_c$,  which is the level of the latest visited children of $\LE$ in the previous iterations. Intuitively, the level of a total order is the first event that is not in the right place compared to the same place as the root total order. For example, $12435$ has level $3$. Line~\ref{line:next} calculates the standard order move to the next total order, which is uniquely determined by $\LE$ and $l_c$. Here an order move is an operation that right shifts the position of an event in a total order. 
Then, GCDO either moves to the next total order (Lines~\ref{line:next_order_ij}-\ref{line:next_order}) or backtracks (Lines~\ref{line:back_ij}-\ref{line:back}). The function $\textsc{NextMove}$ is implemented by Equation~\ref{eq:search} in Section~\ref{sec:approach:tree} 

\subsubsection{Extracting Bounding Constraints} The algorithm extracts a set of bounding constraints and adds them to $\Theta$ (Line~\ref{line:updateO}) when the true cost of a total order is queried (Line~\ref{line:callg}). Each bounding constraint is a set of partial orders and constraints that impose a certain amount of cost when satisfied.
An example bounding constraint is $\theta_7 = (1 \prec 4) \land (2 \prec 5) \land s_1 \land s_3 \land s_4$ with cost $3$. As partial orders $(1 \prec 4) \land (2 \prec 5)$ force the concurrency of Flow-A, Flow-C, and Flow-D, which cannot be transferred together, any total orders that imply these partial orders will have to at least relax constraint $s3$, which imposes cost $3$. With bounding constraints $\Theta$, we can estimate total order costs without using $g$ (Line~\ref{line:estimate}) and jump further than the standard move to prune inconsistent or suboptimal total orders (Lines \ref{line:reduce}-\ref{line:update_move}). The implementation of function $\textsc{InitCB}$ and the method construct bounding constraint extraction function $f$ are introduced in Section~\ref{sec:approach:extract}.

\subsubsection{Estimating Total Order Costs} At each iteration, we start by calculating an optimistic cost estimation of $\LE$ by using bounding constraints $\Theta$ (Line~\ref{line:estimate}). Our estimation method combines multiple manifested bounding constraints of $\LE$ to determine an informative lower bound estimation while being optimistic. As using $g$ to calculate the true cost of $\LE$ is computationally expensive (Line~\ref{line:callg}), we evaluate $g(\LE)$ only when the estimation is better than the incumbent cost (Line~\ref{line:checkg}). The function $\textsc{EstimateCost}$ is implemented by Equation~\ref{eq:disjoint} in Section~\ref{sec:approach:estimation}.

\subsubsection{Reduction-directed Order Moves} With bounding constraints $\Theta$, the GCDO algorithm computes the first reducing move, which is the first order move that jumps over the inconsistent or suboptimal total orders with respect to the incumbent $\gamma^*$ (Line~\ref{line:reduce}). The order move is then used to update the standard order move to jump further (Line~\ref{line:update_move}). Finding such order moves is based on the observation that some partial orders, which manifest a set of bounding constraints with a total cost larger than the incumbent, can be persistent in some subtrees. Thus, these subtrees and total orders can be pruned without impairing completeness or suboptimality.   The function $\textsc{FirstReducing}$ is implemented by Equations~\ref{eq:cost}-\ref{eq:reduction} in Section~\ref{sec:approach:reduce}.

\begin{algorithm}[t!]\small
\caption{GCDO}
\label{alg:gcdo}
\KwIn{$\langle n, \Phi^+, g, f\rangle$}
\KwOut{$(\LE^*, \gamma^*)$}
$(\LE, l_c) \gets ((1, 2,..,n), 0 )$\label{line:init1}\;
$(\LE^*, \gamma^*, \Theta) \gets (\{\}, \infty, \textsc{InitBC}(\Phi^+)$\label{line:init2}\;
\While{$\LE \ != \ \{\}$}{
    
    $\gamma \gets \textsc{EstimateCost}(\Theta, \LE)$ \label{line:estimate}\;
    \If{$\gamma < \gamma^*$ \label{line:checkg}}
    {$\gamma \gets g(\LE)$ \label{line:callg}\;
    $\Theta \gets \Theta \cup f(\LE)$ \label{line:updateO}\;
    \lIf{$\gamma < \gamma^*$}
    {$(\LE^*, \gamma^*) \gets (\LE, \gamma)$\label{line:updateInc}
    \label{line:updateL}}}

    $(i' \rightarrow j') \gets \textsc{NextMove}(\LE, l_c)$ \label{line:next} \;
    $(i^\dagger \rightarrow j^\dagger) \gets \textsc{FirstReducing}(\mathcal{L}, \Theta, \gamma^*)$ \label{line:reduce}\;

    \lIf{$ni'+j' < ni^\dagger+j^\dagger$}
    {$(i' \rightarrow j') \gets (i^\dagger \rightarrow j^\dagger)$}\label{line:update_move}

    \uIf{$i' < n$}
    {$l_c \gets 0$ \label{line:next_order_ij}\;
    $\LE \gets \LE \oplus (i' \rightarrow j')$ \label{line:next_order}}
    \Else{
    $l_c \gets \textsc{PLV}(\LE)$ \label{line:back_ij}\tcp{\scriptsize position of level event}
    $\LE \gets \textsc{Parent}(\LE)$ \label{line:back}\;}

}
\KwRet{$(\LE^*, \gamma^*)$}\label{line:return}\; 
\end{algorithm}

\subsection{Total Order Search}
\label{sec:approach:tree}

Now we introduce the total order tree \cite{ono2005constant} and a depth-first search strategy in this tree \cite{wang2015}. In the tree of events $E = \{1,2,..,n\}$, nodes are total orders of $E$, and an edge is an operation of altering partial orders of a total order. This tree is rooted at the root total order $(1,2,..n)$ and constructed by expanding all the children of each total order. The tree expansion uses the notions of levels and order moves, which are defined as follows:

\begin{definition}[Level]
\label{def:level}
The level of a total order $\LE = (p_1, p_2,..,p_n) \not = (1,2,..,n)$ is the minimal integer $l$ such that $p_l \not = l$. The level of $(1,2,..,n)$ is $n$.\end{definition}

\begin{definition}[Order Move]
\label{def:move}
An order move $(i \rightarrow j)$ deletes $p_i$ from a total order $\LE = (p_1, p_2,..,p_n)$ and inserts it right after $p_j$ to obtain a total order $\LE'$. This operation is denoted as $\LE' = \LE \oplus (i \rightarrow j)$.
\end{definition}

In this tree, we generate a child by right shifting an event that is less than its parent level. To generate all the children of a total order $\LE$  with level, we apply $\LE \oplus (i \rightarrow j)$ for every $i < l$ and $i < j \leq n$. This tree exactly includes all the total orders of a set of events, which is proved in \cite{ono2005constant}. The total order tree of $E=\{1,2,3,4\}$ is given as an example in Figure~\ref{fig:tree}.

As we can see, the total order level decreases from parents to children. Meanwhile, since the total order tree constrains the feasible order moves with respect to the total orders' level, a portion of partial orders can be persistent in all the total orders in a subtree, which is summarized as Lemma~\ref{lemma:fixed_in_children}:

\begin{lemma}\label{lemma:fixed_in_children}
For a total order with level $l$, order move can only right shift $i < l$ in its subtree, and the partial orders between events $\{l, l+1,..,n\}$ remain in its descendants.
\end{lemma}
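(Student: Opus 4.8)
The plan is to prove the two assertions separately, both resting on the single structural fact that the child produced by an order move $(i\rightarrow j)$ has level exactly $i$, which is strictly smaller than the parent's level. First I would verify this fact: if $\LE=(p_1,\dots,p_n)$ has level $l$, then by Definition~\ref{def:level} we have $p_k=k$ for every $k<l$, and the admissible moves from $\LE$ are exactly those $(i\rightarrow j)$ with $i<l$ and $i<j\le n$. Applying such a move deletes the event $p_i=i$ from position $i$ and reinserts it after position $j>i$, so positions $1,\dots,i-1$ still carry $1,\dots,i-1$, while position $i$ now carries the former $p_{i+1}$, an event necessarily $\ge i+1$ (the events $1,\dots,i$ are all accounted for at or before position $i$ in $\LE$, and $i$ has just been removed). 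Hence the resulting total order has level exactly $i<l$, which also re-establishes the already-noted fact that the level strictly decreases from a parent to each of its children.

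For the first assertion I would argue by descent in the tree. Every node of the subtree rooted at $\LE$ is reached from $\LE$ by a chain of moves, and along any such chain the level strictly decreases by the fact above; consequently every node in the subtree has level at most $l$. The child-generation rule expands a node of level $l'$ only through moves $(i\rightarrow j)$ with $i<l'\le l$, so every order move occurring anywhere in the subtree satisfies $i<l$. Moreover, since a node of level $l'$ has its first $l'-1$ positions fixed to $1,\dots,l'-1$, the event sitting at position $i$ of that node is event $i$ itself; thus such a move right shifts exactly the event $i<l$, establishing that only events below $l$ are ever relocated within the subtree.

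The second assertion then follows by induction on the depth of a descendant within the subtree. A single order move $(i\rightarrow j)$ relocates only the one event $i$ and leaves the relative order of all the remaining events untouched. Since the first assertion guarantees $i<l$, none of the events $\{l,l+1,\dots,n\}$ is ever the event being moved, so each move in the subtree preserves every pairwise order among these events. Composing the moves along any path from $\LE$ to a descendant therefore keeps the partial orders among $\{l,l+1,\dots,n\}$ exactly as they stand in $\LE$, which is the claim.

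The main obstacle is the bookkeeping in the base computation: confirming that after event $i$ is deleted, position $i$ is occupied by an event strictly greater than $i$, so the child's level is pinned to $i$ rather than merely bounded above by $l$. Once that is nailed, both assertions are short inductive consequences; the remaining care is simply to keep the distinction between an event and its position straight, since the move is indexed by positions while the second claim is phrased in terms of events.
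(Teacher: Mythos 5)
Your proof is correct and takes essentially the same route as the paper, which justifies the lemma only by the informal observation that the level strictly decreases from parent to child and that feasible order moves are restricted to positions below the level. Your extra bookkeeping step --- showing that a move $(i \rightarrow j)$ from a node of level $l$ yields a child of level exactly $i < l$, so that only events below $l$ are ever relocated and the relative order of $\{l, l+1, \ldots, n\}$ is preserved --- is a sound elaboration of that sketch, with no gaps.
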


Based on this property, as long as we know the partial orders that lead to inconsistency or suboptimality, we can identify the subtrees that always have these partial orders, which can be safely pruned.




To search this tree, we follow the depth-first order: (1) from a total order, the algorithm first visits its children and then its siblings with the same level; (2) when its children and these siblings are exhausted, the algorithm backtracks to its parent; (3) The group of children with the lowest level are generated first, and within each group, children are generated in the order of right shifting children's level events until the right end; (4) the same-level siblings are also generated by right shifting their level events until the right end. The order of visiting all the total orders of four events is given in Figure~\ref{fig:tree}. Formally, from a total order $\LE = (p_1, p_2,..,p_n)$, when its latest visited child has level $l_c$, the next move $
\textsc{NextMove}(i,j,\LE)$ is calculated as follows:

\begin{equation}\footnotesize
\label{eq:search}
\begin{cases} 
      (l_c + 1 \rightarrow l_c + 2) & (l_c < l -1) \\
      (\textsc{PLV}(\LE) \rightarrow  \textsc{PLV}(\LE)+ 1) & (l_c = l -1)
 \end{cases},
\end{equation}
where $\textsc{PLV}(\LE)$ is the position of the level of $\LE$ in itself. Note that the feasible order moves under these two conditions lead to the children of $\LE$ and its siblings with level $l$, respectively. By following Equation~\ref{eq:search}, $(i \rightarrow j)$ with lower $(ni+j)$ is taken first from a total order and $(ni+j) \leq (nl+n)$ holds for the feasible order moves. Feasible order moves are simply applied as Lines~\ref{line:next_order_ij}-\ref{line:next_order}. When the returned move is $(n \rightarrow n+1)$, which is infeasible and means all the children and same-level siblings are exhausted, the algorithm backtracks to the parent of $\LE$ (Lines~\ref{line:back_ij}-\ref{line:back}).

\begin{figure}[t!]
\centering
\includegraphics[width=0.9\columnwidth]{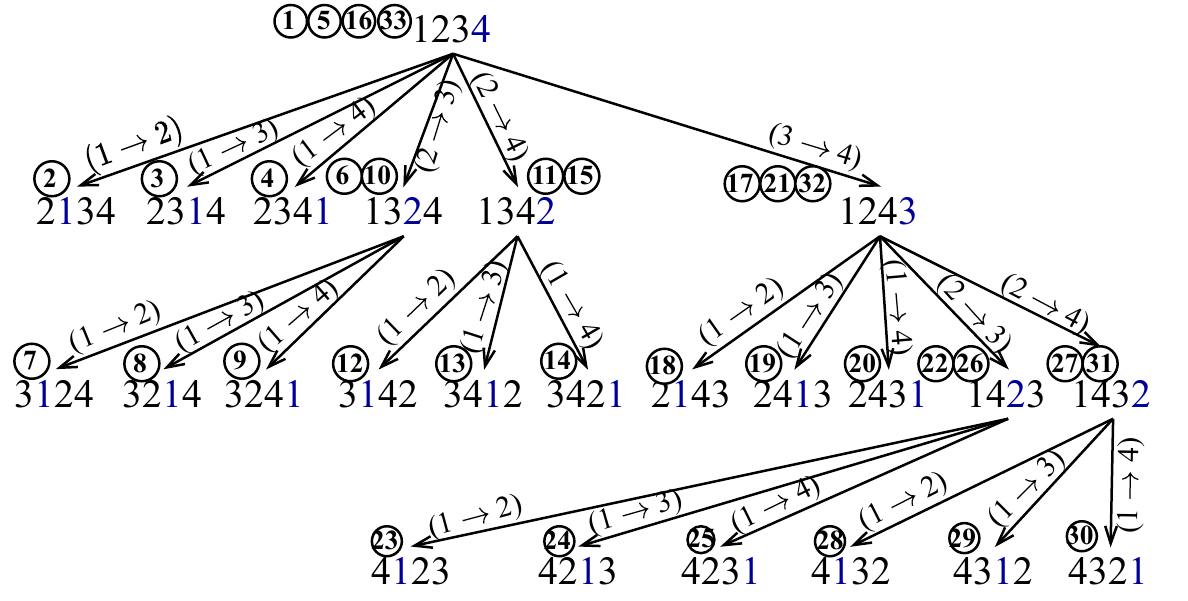}
\caption{Total order tree of $E = \{1,2,3,4\}$. The levels of total orders are in blue. The orderings of visiting these total orders by following Equation~\ref{eq:search} are given in their upper left.}
\label{fig:tree}
\vspace{-10pt}
\end{figure}

\subsection{Extracting Bounding Constraints}
\label{sec:approach:extract}
In this section, we introduce the formal definition of bounding constraints and their extraction method. Intuitively, the bounding constraints are a set of partial orders along with the constraint that imposes a certain amount of cost when satisfied, which are the generalization of ordering conflicts to account for costs \cite{chen2019efficiently}. They are also similar in spirit to valued nogoods \cite{dago1996nogood} and bounding conflicts  \cite{timmons2020best} by associating partial assignments with objective value bounds, but the bounding constraints are tailored to the ordering problems by replacing partial assignments with partial orders.

In our motivating example, Flow-A, Flow-C, and Flow-D cannot be transferred concurrently because of the limited bandwidth capacity of the two available paths. When a total order forces them to be concurrent, dropping Flow-C is the optimal relaxation since the other flows have higher priorities and thus higher costs to drop. As Flow-C starts at $2$ and ends at $4$, and Flow-A and Flow-D start at $1$ and end at $5$, this concurrency can be summarized as the partial orders $(1 \prec 4) \land (2 \prec 5)$, which compactly capture all the combinations of this concurrency: $1245$, $1254$, $2145$, and $2154$. As the state constraints of these flows are $\{s_1, s_2, s_3\}$, the fact that this concurrency imposes at least cost $3$ can be summarized as a bounding constraint $\theta_7 = (1 \prec 4) \land (2 \prec 5) \land s_1 \land s_3 \land s_4$ with cost $3$. Another bounding constraint example for state constraints is $\theta_6 = \textsc{PO}(\theta_6) = (1 \prec 3) \land (1 \prec 4) \land (2 \prec 5) \land s_1 \land s_2 \land s_3 \land s_4$ with cost $1$, which summarizes that the concurrency of all the flows has cost $8$ by relaxing Flow-B and Flow-C.

In addition to the bounding constraints for state constraints, a bounding constraint example of temporal constraints is $\theta_8 = (3 \prec 1) \land (4 \prec 1) \land t_1 \land t_2 \land \land t_3 \land t_4 \land t_5$ with cost $1$, which is manifested by total order $23415$ as in Figure~\ref{fig:example_soln}. As the partial orders $(3 \prec 1) \land  (4 \prec 1)$ force both Flow-B and Flow-C to end before Flow-A and Flow-D start, the mission takes at least $80$ seconds given temporal constraints $\{t_1, t_2, t_3, t_4\}$, which contradicts with time limit $70$ seconds specified by $t_5$. As relaxing $t_5$ will remove this time limit with cost $1$, which is the optimal relaxation since the other involved constraints are hard constraints, this bounding constraint has cost $1$.

The third kind of bounding constraints is for ordering constraints, which is simpler than those for state and temporal constraints. For example, total order $12453$ includes partial order $(5 \prec 3)$ and violates the ordering constraint $o_4 = (3 \prec 5)$ with cost $\infty$, which can be summarized as a bounding constraint $\theta_4$ with cost $\infty$. Formally, we define the bounding constraints as follows:

\begin{definition}[Bounding Constraints]\label{def:bounding_ordering} 
Let $\Phi$ be a set of constraints. A bounding constraint $\theta$ associated with cost $\textsc{cost}(\theta)$ is a conjunction of a set of partial orders $\textsc{PO}(\theta)$ and a set of constraints $\textsc{CS}(\theta) \subseteq \Phi$ such that $\textsc{cost}(\theta)$ is the cost sum of the optimal relaxation of $\textsc{CS}(\theta)$ under $\textsc{PO}(\theta)$. We say a total order $\mathcal{L}$ manifests $\theta$ if $\mathcal{L}$ implies $\textsc{PO}(\theta)$.
\end{definition}

For ordering constraint $\phi^+_r = \lor q_{rs}$, where $q_{rs}$ is a partial order, we can obtain its corresponding bounding constraint $\theta_r$ by having $\textsc{PO}(\theta_r) = \land q_{rs}$, $\textsc{CS}(\theta_r) = \{\phi^+_r\}$, and  $\textsc{cost}(\theta_r) = w(o_r)$. As this procedure is easy, we extract the bounding constraints for all the ordering constraints $\Phi^+$ before search, which is implemented as function \textsc{InitBC}.

Now we introduce our method to construct bounding extraction function $f$, which finds $\textsc{PO}$, $\textsc{CS}$, and $\textsc{cost}$ of bounding constraints for state and temporal constraints. First, we extract $\textsc{PO}$ for state and temporal constraints by using the same approach to extract ordering conflicts as \cite{chen2019efficiently}: we represent the concurrency of state constraints as a polynomial number of partial orders; and we obtain the partial orders that are inconsistent with a set of temporal constraints by collecting the imposed partial orders in their negative cycle \cite{dechter1991temporal}. They are given in Equation~\ref{eq:inconsistent_state} and Equation~\ref{eq:inconsistent_time}, respectively.

The partial orders $\textsc{PO}^s$ to present the concurrency of multiple state constraints are:

\begin{equation}\small
\label{eq:inconsistent_state}
    \textsc{PO}^{s} = \underset{i,j}{\land}R^{s}_{ij} = \underset{i,j}{\land} (x_i^\vdash \prec x_j^\dashv).
\end{equation}
where each $R^s_{ij} = (x_i^\vdash \prec x_j^\dashv) \land (x_j^\vdash \prec x_i^\dashv)$ represents the concurrency of two tasks (i.e., state constraints), and $x_i^\vdash$ and $x_i^\dashv$ are the start and end events of the $i^{\text{th}}$ task.

When a negative cycle in a simple temporal network is found, the corresponding partial orders $\textsc{PO}^t$ are:

\begin{equation}\small
\label{eq:inconsistent_time}
    \textsc{PO}^{t} = \underset{i}{\land}R^t_i = \underset{i}{\land}(x_i^- \prec x_i^+),
\end{equation}
where $R^t_i = (x_i^- \prec x_i^+)$ represents the from event and to event of a temporal constraint that is involved in the cycle and added because of total ordering.

Then, $\textsc{CS}$ in these two cases are the concurrent state constraints and all the temporal constraints involved in the negative cycle, respectively.  Lastly, $\textsc{cost}$ can be obtained by associating the state and temporal constraints with costs and solving an optimization problem for an optimal relaxation. In our example, we use the solvers in \cite{chen2018radmax} and \cite{yu2013continuously} to find the optimal relaxation with respect to the state and temporal constraints, respectively, and $\textsc{cost}$ is the cost sum of the relaxed constraints.

\subsection{Estimating Total Order Costs}
\label{sec:approach:estimation}
Given a set of bounding constraints $\Theta$ manifested by a total order $\LE$, we can calculate $\gamma^\Theta$, an optimistic, informative cost estimation of $\LE$ by using $\Theta$ instead of evaluating the exact cost $g(\LE)$. The latter usually requires solving complex optimization problems, which is more computationally expensive and should be avoided as much as possible. For example, without using $g$, we know the costs of total orders that manifest $\theta_6$ with $\textsc{PO}(\theta_6) = (1 \prec 3) \land (1 \prec 4) \land (2 \prec 5)$ such as $12435$ and $12453$ are at least $8$.

An informative cost estimation of a total order should be as large as possible to lower bound its true cost. To be informative, we want to incorporate the information of multiple bounding constraints. For example, total order $12453$ manifests both bounding constraints $\theta_5$ with cost $\infty$ and $\theta_6$ with cost $8$. Therefore, the cost estimation of $12435$ is lower bounded by $(\infty+8)$ by summing their costs together. 

Meanwhile, we still require the estimation to be optimistic when considering multiple bounding constraints. An optimistic cost estimation is a lower bound of the true cost. Thus, the constraint sets of the used bounding constraints should neither count the costs of unnecessary relaxations nor double count same relaxations.  For example, total order $12435$ manifests both bounding constraints $\theta_7$ with cost $3$ and $\theta_6$ with cost $8$. An optimistic cost estimation of $12435$ is $8$, which is determined by $\theta_6$ instead of counting both $\theta_7$ and $\theta_6$. This is because they share $s_3$ (i.e., Flow-C) in their optimal relaxation, and its cost should be counted only once. We choose $\theta_6$ with a higher cost to get a closer bound to its true cost.

\subsubsection{Disjoint Bounding Constraints} Formally, when we use the cost sum of multiple bounding constraints to obtain an informative estimation, the key to being optimistic is to use a set of disjoint bounding constraints, which is formally defined as follows:

\begin{definition}[Disjoint Bounding Constraints]\label{def:disjoint}
Two bounding constraints $\theta_i$ and $\theta_j$ are disjoint if the intersection of their constraint sets $\textsc{CS}(\theta_i) \cap \textsc{CS}(\theta_j)$ only include hard constraints.
\end{definition}

Given a set of disjoint bounding constraints $D$ manifested by total order $\LE$, we can estimate the cost of $\LE$ as $\sum_{\theta_r \in D} \textsc{cost}(\theta_r)$. Based on Definition~\ref{def:disjoint}, we conclude the optimism of this estimation in Lemma~\ref{lemma:optimistic}.

\begin{lemma}\label{lemma:optimistic}
Let $\LE$ be a total order with cost $\gamma$ and $D$ be a set of disjoint bounding constraints manifested by $\LE$. Let $\sum_{\theta_r \in D} \textsc{cost}(\theta_r) = k\infty+c$ be a cost estimation of $\LE$. If $k = 0$, we have $c \leq \gamma$; otherwise, $\gamma \geq \infty$.
\end{lemma}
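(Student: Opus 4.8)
The plan is to show the single master inequality $\sum_{\theta_r \in D}\textsc{cost}(\theta_r) \le \gamma$ in the ordered arithmetic of values $k\infty + c$, from which both cases of the lemma follow at once. Fix an optimal relaxation $\Phi^*$ of $\Phi$ under $\LE$, so that $\gamma = \sum_{\phi \in \Phi^*} w(\phi)$ and $h(\LE, \Phi \setminus \Phi^*) = \top$. The whole argument rests on two monotonicity facts about the consistency semantics that I would state up front: (i) suspending additional constraints never destroys consistency, so every subset of a consistent constraint set is consistent and every superset of an inconsistent set is inconsistent; and (ii) since $\LE$ manifests $\theta_r$, i.e.\ $\LE$ implies $\textsc{PO}(\theta_r)$, the total order $\LE$ refines the partial orders $\textsc{PO}(\theta_r)$, so consistency under $\LE$ entails consistency under $\textsc{PO}(\theta_r)$ and, contrapositively, inconsistency under $\textsc{PO}(\theta_r)$ entails inconsistency under $\LE$.

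The first key step is a per-constraint bound: for each $\theta_r \in D$,
\[
\textsc{cost}(\theta_r) \;\le\; \sum_{\phi \in \Phi^* \cap \textsc{CS}(\theta_r)} w(\phi).
\]
To prove this I would look at the set $\textsc{CS}(\theta_r)\setminus\Phi^*$ of constraints of $\textsc{CS}(\theta_r)$ that survive the global relaxation. As a subset of the $\LE$-consistent set $\Phi\setminus\Phi^*$, it is $\LE$-consistent by (i), hence consistent under $\textsc{PO}(\theta_r)$ by (ii). Therefore $\Phi^*\cap\textsc{CS}(\theta_r)$ is a \emph{feasible} relaxation of $\textsc{CS}(\theta_r)$ under $\textsc{PO}(\theta_r)$, and since $\textsc{cost}(\theta_r)$ is by Definition~\ref{def:bounding_ordering} the cost of the \emph{optimal} such relaxation, the displayed inequality is immediate.

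It then remains to sum over $D$ without double counting, which is where Definition~\ref{def:disjoint} enters, and to close the two cases. Summing the per-constraint bounds gives $\sum_{\theta_r\in D}\textsc{cost}(\theta_r)\le\sum_{\theta_r\in D}\sum_{\phi\in\Phi^*\cap\textsc{CS}(\theta_r)}w(\phi)$. If $k=0$, every $\textsc{cost}(\theta_r)$ is finite; when $\gamma$ is finite, $\Phi^*$ relaxes only soft constraints, and by disjointness each soft constraint lies in at most one $\textsc{CS}(\theta_r)$, so the right-hand side counts each soft constraint of $\Phi^*$ at most once and is bounded by $\sum_{\phi\in\Phi^*}w(\phi)=\gamma$, yielding $c\le\gamma$; when $\gamma$ is infinite, $c\le\gamma$ is trivial. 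If instead $k>0$, some $\theta_r$ has infinite cost, meaning that even relaxing all soft constraints the hard constraints of $\textsc{CS}(\theta_r)$ are inconsistent under $\textsc{PO}(\theta_r)$; by (ii) they are inconsistent under $\LE$, so by (i) the surviving set $\Phi\setminus\Phi^*$ cannot retain all of them, forcing $\Phi^*$ to drop a hard constraint and making $\gamma$ infinite, i.e.\ $\gamma\ge\infty$.

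I expect the main obstacle to be making monotonicity fact (ii) precise: $\textsc{cost}(\theta_r)$ is defined through consistency under a \emph{set of partial orders}, whereas $g(\LE)$ is defined through the \emph{total-order} consistency function $h$, so the proof hinges on a clean statement that $h$ respects order refinement---satisfiability can only decrease as the ordering is tightened. The disjointness bookkeeping is the other delicate point, since hard constraints may still be shared and double counted; the case split is arranged precisely so that any such sharing is confined to the $k>0$ branch, where the crude $\infty$ bound suffices and exact counting is never needed.
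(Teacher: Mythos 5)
Your proof is correct and follows the same essential route as the paper's: disjointness confines any shared constraints to hard ones, so soft relaxation costs are never double counted when summing over $D$, and an infinite-cost bounding constraint manifested by $\LE$ forces $\gamma \geq \infty$. Your per-constraint bound --- restricting the global optimal relaxation to each $\textsc{CS}(\theta_r)$ and invoking monotonicity of consistency under order refinement --- is simply a more rigorous rendering of the paper's informal claim that the per-bounding-constraint optimal relaxations are ``independent,'' and your closing remark correctly identifies the one point the paper leaves implicit (that consistency can only decrease as partial orders are tightened to a total order).
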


\begin{proof}

When $k = 0$ for $k\infty + c$, the estimation is optimistic because: given any two disjoint bounding constraints $\theta_i$ and $\theta_j$, we know the intersection of their constraint sets $\textsc{CS}(\theta_i) \cap \textsc{CS}(\theta_j)$ have only hard constraints, and thus soft constraints can not appear in the intersection of their relaxations. Therefore, the choice of the optimal relaxation for each bounding constraint over a set of soft constraints is independent and thus remains optimal. When $k > 0$ for $k\infty + c$, there must be a bounding constraint with cost $\infty$, which is manifested by total order $\LE$, and thus $\LE$ must be inconsistent and has at lest cost $\infty$.
\end{proof} 

\subsubsection{Finding Informative Optimistic Estimation}
Given a set of bounding constraints $\Theta$ manifested by total order $\mathcal{L}$, we calculate an informative estimation $\gamma^{\Theta}$  by choosing a set of disjoint bounding constraints $D$ with the maximal cost sum:

\begin{equation}\label{eq:disjoint}
    \gamma^{\Theta} = \max_{(D \subseteq \Theta) \land (D \text{ is disjoint})} \sum_{\theta_r \in D} \textsc{cost}(\theta_r)
\end{equation}

As an implementation of $\textsc{EstimateCost}$, we find such disjoint bounding orderings $D \subseteq \Theta$ and its cost $\gamma^\Theta$ by solving a weighted maximum clique problem \cite{bomze1999maximum}, which is NP-hard but can be solved very efficiently in practice \cite{cai2016fast,balas1996weighted}. We first construct an undirected  graph: the vertices are bounding constraint $\Theta$, the weight of each vertex $\theta \in \Theta$ is $\textsc{cost}(\theta)$, and there is an undirected edge between two vertices $\theta_i, \theta_j \in \Theta$ if and only if they are disjoint as given in Definition~\ref{def:disjoint}.  A clique $D$ is a subset of vertices such that every two distinct vertices in this clique are adjacent, and a maximum clique is not a subset of any other clique.  
In the graph, each maximum clique $D$ represents a set of disjoint bounding constraints. To be informative, we choose the clique with the maximum cost sum to have a tight bound. 


\subsection{Reduction-directed Order Moves} 
\label{sec:approach:reduce}
In this section, we introduce the method to find the next order move that jumps over inconsistent or suboptimal total orders given the incumbent cost. We first introduce our method to find the first resolving move of a bounding constraint $\theta$, before which any total order still manifests $\theta$ and thus has at least cost $\textsc{cost}(\theta)$.  By using these first resolving moves, we identify the first reducing move of a desired cost reduction, before which any total order still manifests a set of bounding constraints and is inconsistent or suboptimal.

\subsubsection{First Resolving Move}
\label{sec:approach:cdito:resolution}

To find the first resolving move of a bounding constraint, we use the same method as resolving ordering conflicts in \cite{chen2019efficiently}, which is based on Lemma~\ref{lemma:fixed_in_children} and finds the first order move that leads to a total order negating at least one partial order in its partial orders.

Consider total order $12453$ that manifests bounding constraints $\{\theta_4, \theta_6\}$ in the thirteen iteration. We consider finding the first resolving move of one of its bounding constraints, $\theta_6$, as an example. From $12453$, the standard next move calculated by Equation~\ref{eq:search} is $(1 \rightarrow 2)$ and leads to $21435$, which still implies $\textsc{PO}(\theta_6)=(1 \prec 3) \land (1 \prec 4) \land (2 \prec 5)$. To resolve $\theta_6$, we should jump over $21453$ and directly move to $24153$ through order move $(1 \rightarrow 3)$, which negates $(1 \prec 4)$ in $\textsc{PO}(\theta_6)$ and can reduce $\textsc{cost}(\theta_6)$. Thus, $(1 \rightarrow 3)$ is the first resolving move of $\theta$ from $12453$. While any order move before $(1 \rightarrow 3)$ is nonhelpful, the order moves after it may also resolve $\theta_6$ . For example, $(1 \rightarrow 4)$ moves to $24513$, which negates both $(1 \prec 3)$ and $(1 \prec 4)$. Another example is $(2 \rightarrow 3)$. Even though the new total order $14253$ still implies $\textsc{PO}(\theta_6)$, its subtree may contain total orders that resolve $\theta_6$ such as taking $(1 \rightarrow 2)$ in subsequent.

We return infeasible moves $(n \rightarrow n+1)$ when there is no total order that resolves the bounding constraint among the descendants of the current total order, its same-level siblings, and the descendants of these siblings. Consider the other bounding constraint $\theta_4$ manifested by $12453$. As the level of $12453$ is $3$, the feasible order moves from $12453$ can only right shift events $1$ or $2$ to generate its children or right shift $3$ to generate its siblings. However, $3$ has reached the right end, and right shifting $1$ or $2$ does not change partial order $\textsc{PO}(\theta_4) = (5 \prec 3)$. Moreover, the levels of the new total orders obtained by shifting $1$ or $2$ are no more than $2$, and thus there is no chance to negate $(5 \prec 3)$ in the following moves by Lemma~\ref{lemma:fixed_in_children}. Thus, the first resolving move of $\theta_4$ is $(5 \rightarrow 6)$, which means backtrack.

Formally, consider a bounding constraint $\theta_r$ with partial orders $\textsc{PO}(\theta_r) = \underset{s}{\land} q_{rs}$ manifested by total order $\LE=(p_1,p_2,...,p_n)$ with level $l$. The first resolving order move of $\theta_r$ from $\LE$ is

\begin{equation}
\label{eq:single_resolution}\small
    (i^{\dagger}_r \rightarrow j^{\dagger}_r) = \underset{(i' \rightarrow j') \in \Pi_r}{\text{argmin}} (ni'+j'),
\end{equation}
where $\Pi_r= \{(i' \rightarrow j') \, | \, ((p_{i'} \prec p_{j'}) \in \textsc{PO}(\theta_r))  \land (p_{i'} \leq l)\} \cup \{(n \rightarrow n+1)\}$. Order move $(n, n+1)$ is infeasible and means GCDO should backtrack as in Equation~\ref{eq:search}.

\begin{table*}[ht]\small
\centering
\begin{tabular}{|c||c|c|c|c|c||c|c|c|c|c||c|c|c|c|}
\hline
\multirow{2}{*}{\textbf{\#flows}} & \multicolumn{5}{l|}{\hspace{0.23\columnwidth}\textbf{GCDO}} & \multicolumn{5}{l|}{\hspace{0.23\columnwidth}\textbf{CDITO}} & \multicolumn{4}{l|}{\hspace{0.23\columnwidth}\textbf{MILP}} \\ 
\cline{2-15} 
& \ \ \ $t_1$ \ \ \  & \ \ \ $\gamma_1$ \ \ \ & \ \ \  $\gamma$ \ \ \  & \ $\eta$ \ & \ \ \  $\zeta$ \ \ \  & \ \ \  $t_1$ \ \ \  & \ \ \  $\gamma_1$ \ \ \   & \  \ \  $\gamma$ \ \ \   & \ $\eta$  \ & \ \ \  $\zeta$ \ \ \ & \ \ \  $t_1$ \ \ \  & \ \ \  $\gamma_1$ \ \ \   & \  \ \  $\gamma$ \ \ \   & \  $\eta$  \    \\ \hline
5 & 0.029 & 2.30 & \textbf{0} & \textbf{100} & 0.087 & \textbf{0.020} & 2.30 & \textbf{0} & \textbf{100} & 0.763 & 0.062 & 4 & \textbf{0} & \textbf{100} \\ \hline
10 & 0.043 & 2.87 & \textbf{0.43} & \textbf{81} & 0.086 & \textbf{0.036} & 2.87 & 1.52 & 9 & 0.751 & 0.273 & 8 & 0.73 & 74 \\ \hline
15 & 0.096 & 3.96 & \textbf{0.41} & \textbf{80} & 0.043 & \textbf{0.089} & 3.96 & 3.04 & 3 & 0.727 & 1.055 & 12 & 1.14 & 41 \\ \hline
20 & 0.172 & 5.18 & \textbf{0.47} & \textbf{77} & 0.008 & \textbf{0.153} & 5.18 & 4.27 & 0 & 0.706 & 1.791 & 16 & 1.09 & 52  \\ \hline
25 & 0.301 & 6.12 & \textbf{0.49} & \textbf{73} & 0.006 & \textbf{0.229} & 6.12 & 5.24 & 0 & 0.681 & 3.516 & 20 & 1.33 & 47 \\  \hline
30 & 0.460 & 7.24 & \textbf{1.53} & \textbf{57} & 0.004  & \textbf{0.317} & 7.24 & 6.54 & 0 & 0.676 & 4.949 & 24 & 1.95 & 19 \\ \hline
\end{tabular}
\caption{Experimental results. $t_1$: the average runtime to find the first solution; $\gamma_1$: the average cost of the first solution; $\gamma$: the average cost of the returned solutions. $\eta$: the number of the problems whose returned solutions are optimal. $\zeta$: the ratio of the average times of calls to $g$  to the average number of the explored total orders. We highlight the best results of $t_1$, $\gamma$, and $\eta$.}
\label{table:result}
\vspace{-10pt}
\end{table*}

\subsubsection{First Reducing Move}
\label{sec:approach:cdito:resolution}
Let $\Delta = \gamma^D - \gamma^*$ be the gap between an optimistic cost estimation $\gamma^D$ given disjoint bounding constraints $D$ and the incumbent cost $\gamma^*$. We need to find the first order move that resolves a set of bounding constraints $D' \subseteq D$ with enough cost reduction $\sum_{\theta \in D'}  \textsc{cost}(\theta)$ such that $\sum_{\theta \in D'}  \textsc{cost}(\theta) > \Delta$. The first order move to achieve this reduction $\Delta$ is called the first reducing move of $\Delta$ from $\mathcal{L}$ given $D$. From another perspective, any order move that is before this move must lead to the total orders whose costs are at least $\gamma^*$ and thus not worth exploring.

In our previous example, we consider total order $12453$ with manifested disjoint bounding constraints $\{\theta_4, \theta_6\}$ and incumbent cost $1$ in the thirteen iteration. The first resolving move of $\theta_4$ and $\theta_6$ are $(1 \rightarrow 3)$ and $(5 \rightarrow 6)$, respectively. Recall that any order move before the first resolving move of a bounding constraint will lead to a subtree where this bounding constraint remains. Thus, as we identify reducing cost from $\infty + 8$ to be less than incumbent $\gamma^*=1$ requires resolving both constraints, we choose the order move that jumps furthest, which is $(5 \rightarrow 6)$ in this example. Thus, the search can safely backtrack by considering these two bounding constraints together.

To identify the first reducing move of incumbent cost $\gamma^*$ from $\LE$ as an implementation of function $\textsc{FirstReducing}$, we start by calculating the first resolving move $(i^\dagger_r \rightarrow i^\dagger_r)$ of each bounding constraint $\theta_r \in D$. Then, we sort all these resolving moves in the order of increasing $(ni^\dagger_r + i^\dagger_r)$, which is the exploration order of the order moves as given in Equation~\ref{eq:search}. We also associate every move with a cost estimation $G^D_r$, which is as follows:

\begin{equation}\label{eq:cost}\small
    G^D_r  = \sum_{k \in \{r+1,r+2,..R\}} \textsc{cost}(\theta_k),
\end{equation}
where $R$ is the number of all the bounding constraints $D$. This cost estimation $G^D_r$ is the cost sum of all the unresolved bounding constraints of $(i^\dagger_r \rightarrow i^\dagger_r)$ from the current total order, which is optimistic following Lemma~\ref{lemma:optimistic}. Therefore, if an order move is before $(i^\dagger_r \rightarrow j^\dagger_r)$ and $G^D_{r-1} > \gamma^*$, it must lead to a total order or a subtree that is inconsistent or suboptimal, which can be safely pruned. Given the current incumbent cost $\gamma^*$, we identify the next resolving move to reduce the cost to be under $\gamma^*$ as follows:

\begin{equation}\label{eq:reduction}\small
    (i^\dagger \rightarrow j^\dagger) = \argmin_{(i^\dagger_r \rightarrow i^\dagger_r) \land G^D_r < \gamma^*} (ni^\dagger_r+j^\dagger_r). 
\end{equation}

\section{Experimental Results}
\label{sec:results}
In order to evaluate the effects of using bounding constraints in GCDO, we benchmarked GCDO on the optimal temporal network configuration problems similar to our motivating example, 
with different complexities and sizes. These problems involve routing flows and allocating bandwidth resources with respect to requirements on loss, delay, bandwidth against CDITO \cite{chen2019efficiently} and Mixed Linear Inter Programming (MILP) approach by using Gurobi \cite{gurobi2020gurobi}. CDITO can interact with a rich class of sub-solvers as our algorithm does. As CDITO is only aware of hard constraints to resolve ordering conflicts, we modified CDITO to keep exploring the solution space after finding the first solution and record the best solution as the incumbent. For both CDITO and GCDO, we use \cite{chen2019efficiently} and \cite{yu2013continuously} as sub-solvers for state and temporal constraints, respectively. The MILP encoding uses Boolean variables to indicate the precedence relations of events, and constraints and violating costs are conditional on these variables.

We use the same communication network simulator in \cite{chen2018radmax} to generates network flow requirements on a meshed network. The major difference is that we associate each flow with a priority to be its relaxing cost. While one-fifth of the flows must be transferred, the relaxing costs of the others are $1$. The other setup is as follows: (1) the mission horizon is 300s; (2) the meshed network has 6 nodes; (3) the loss, delay, and bandwidth of each link are uniformly generated from intervals [0.1,0.3]\%, [0.1,0.3]s, and [500,1000]kbps; (4) the loss, delay, throughput, minimum duration of each network flow are uniformly generated from  [0.1,0.3]\%, [0.1,0.3]s, [600,1000]kbps, and [20,80]s; (5) the generator adds temporal constraints between randomly chosen events with a duration (0,100], and the number of temporal constraints is one-fifth of the number of flows; 

We tested six scenarios of 5, 10, 15, 20, 25, and 30 flows with 100 trials each. The timeout was 30 seconds. We only include the trials in which consistent solutions exist.

Table~\ref{table:result} shows the experimental results. We observe that all GCDO and CDITO can find consistent solutions ten times faster than MILP. As GCDO reduces to CDITO before an incumbent solution is found except recording bounding constraints, their first solutions are exactly the same, and GCDO spends slightly more time on recording these constraints.  In the first solutions, MILP basically drops all the optional flows. It can be seen that while GCDO then reduces at least $80\%$ costs for most of the scenarios, CDITO can only achieve good final cost $\gamma$ when $\#\text{flow} \leq 10$ and fails to achieving more than $20\%$ reduction for the other scenarios in 30s. This demonstrates that GCDO is capable of using suboptimality to efficiently guide the search for high-quality solutions. Meanwhile, the costs of the solutions returned by GCDO is at lest half of that of MILP for most cases.
We also report $\eta$ , the number of problems in which an optimal solution is proved, which requires the algorithms to exhaust the solution space. As GCDO is able to prove the optimality of the returned solutions for a large portion of problems in $30$ seconds, CDITO fails to complete this task for most problems and MILP only performs well when $\#\text{flow} \leq 10$. The calls to the sub-solvers dominate the runtime for both CDITO and GCDO given our observation that more than 95\% runtime is spent on these calls. The major reason for GCDO being efficient is the use bounding constraints to estimate costs and jump over suboptimal total orders without explicitly calling sub-solvers. This can be seen from that the ratios $\zeta$ of GCDO and CDITO differ by two or three orders of magnitude, which means GCDO avoids a large number of unnecessary calls to the evaluation function and instead focuses on thoroughly exploring the solution space. 

\section{Conclusion}
\label{sec:conclusion}

In this paper, we presented GCDO, a generalized conflict-directed search algorithm that efficiently solves the optimal ordering problems with tightly coupled temporal and state constraints. GCDO adopts the branch-and-bound to search a special total order tree and generalizes the ordering conflicts in CDITO to bounding constraints, which can summarize both inconsistency and suboptimality. Thus, GCDO is able to efficiently prune the inconsistent or suboptimal total orders and thus avoids expensive and unnecessary calls to the sub-solvers. In our experiments on optimal temporal network configuration problems generated by a communication network simulator, we empirically demonstrate the efficiency of GCDO over CDITO and a MILP encoding.

\section*{Acknowledgments}
This work at Massachusetts Institute of Technology was supported by Kawasaki Heavy Industries, Ltd (KHI) under grant number 030118-00001. This article solely reflects the opinions and conclusions of its authors and not KHI or any other Kawasaki entity.

\small{\bibliography{bib}}

\begin{thebibliography}{18}
\providecommand{\natexlab}[1]{#1}
\providecommand{\url}[1]{\texttt{#1}}
\providecommand{\urlprefix}{URL }
\expandafter\ifx\csname urlstyle\endcsname\relax
  \providecommand{\doi}[1]{doi:\discretionary{}{}{}#1}\else
  \providecommand{\doi}{doi:\discretionary{}{}{}\begingroup
  \urlstyle{rm}\Url}\fi

\bibitem[{Balas and Xue(1996)}]{balas1996weighted}
Balas, E.; and Xue, J. 1996.
\newblock Weighted and unweighted maximum clique algorithms with upper bounds
  from fractional coloring.
\newblock \emph{Algorithmica} 15(5): 397--412.

\bibitem[{Bomze et~al.(1999)Bomze, Budinich, Pardalos, and
  Pelillo}]{bomze1999maximum}
Bomze, I.~M.; Budinich, M.; Pardalos, P.~M.; and Pelillo, M. 1999.
\newblock The maximum clique problem.
\newblock In \emph{Handbook of combinatorial optimization}, 1--74. Springer.

\bibitem[{Cai and Lin(2016)}]{cai2016fast}
Cai, S.; and Lin, J. 2016.
\newblock Fast Solving Maximum Weight Clique Problem in Massive Graphs.
\newblock In \emph{IJCAI}, 568--574.

\bibitem[{Chen et~al.(2018)Chen, Fang, Muise, Shrobe, Williams, and
  Yu}]{chen2018radmax}
Chen, J.; Fang, C.; Muise, C.; Shrobe, H.; Williams, B.~C.; and Yu, P. 2018.
\newblock RADMAX: Risk And Deadline Aware Planning for Maximum Utility.
\newblock In \emph{AAAI Workshop on Artificial Intelligence for Cyber Security
  (AICS'18)}.

\bibitem[{Chen et~al.(2019)Chen, Fang, Wang, Wang, and
  Williams}]{chen2019efficiently}
Chen, J.; Fang, C.; Wang, D.; Wang, A.; and Williams, B. 2019.
\newblock Efficiently Exploring Ordering Problems through Conflict-Directed
  Search.
\newblock In \emph{Proceedings of the International Conference on Automated
  Planning and Scheduling}, volume~29, 97--105.

\bibitem[{Dago and Verfaillie(1996)}]{dago1996nogood}
Dago, P.; and Verfaillie, G. 1996.
\newblock Nogood recording for valued constraint satisfaction problems.
\newblock In \emph{Proceedings Eighth IEEE International Conference on Tools
  with Artificial Intelligence}, 132--139. IEEE.

\bibitem[{De~Moura and Bj{\o}rner(2008)}]{de2008z3}
De~Moura, L.; and Bj{\o}rner, N. 2008.
\newblock Z3: An efficient SMT solver.
\newblock In \emph{International conference on Tools and Algorithms for the
  Construction and Analysis of Systems}, 337--340. Springer.

\bibitem[{Dechter, Meiri, and Pearl(1991)}]{dechter1991temporal}
Dechter, R.; Meiri, I.; and Pearl, J. 1991.
\newblock Temporal constraint networks.
\newblock \emph{Artificial intelligence} 49(1-3): 61--95.

\bibitem[{Gurobi~Optimization(2020)}]{gurobi2020gurobi}
Gurobi~Optimization, I. 2020.
\newblock Gurobi optimizer reference manual.
\newblock \emph{URL http://www. gurobi. com} .

\bibitem[{Lawler and Wood(1966)}]{lawler1966branch}
Lawler, E.~L.; and Wood, D.~E. 1966.
\newblock Branch-and-bound methods: A survey.
\newblock \emph{Operations research} 14(4): 699--719.

\bibitem[{Manne(1960)}]{manne1960job}
Manne, A.~S. 1960.
\newblock On the job-shop scheduling problem.
\newblock \emph{Operations Research} 8(2): 219--223.

\bibitem[{Ono and Nakano(2005)}]{ono2005constant}
Ono, A.; and Nakano, S.-i. 2005.
\newblock Constant time generation of linear extensions.
\newblock In \emph{FCT}, 445--453. Springer.

\bibitem[{Schiex et~al.(1995)Schiex, Fargier, Verfaillie
  et~al.}]{schiex1995valued}
Schiex, T.; Fargier, H.; Verfaillie, G.; et~al. 1995.
\newblock Valued constraint satisfaction problems: Hard and easy problems.
\newblock \emph{IJCAI (1)} 95: 631--639.

\bibitem[{Sebastiani and Trentin(2020)}]{sebastiani2020optimathsat}
Sebastiani, R.; and Trentin, P. 2020.
\newblock OptiMathSAT: A tool for optimization modulo theories.
\newblock \emph{Journal of Automated Reasoning} 64(3): 423--460.

\bibitem[{Timmons and Williams(2020)}]{timmons2020best}
Timmons, E.~M.; and Williams, B.~C. 2020.
\newblock Best-First Enumeration Based on Bounding Conflicts, and its
  Application to Large-scale Hybrid Estimation.
\newblock \emph{Journal of Artificial Intelligence Research} 67: 1--34.

\bibitem[{Wang(2015)}]{wang2015}
Wang, D. 2015.
\newblock \emph{A Factored Planner for the Temporal Coordination of Autonomous
  Systems}.
\newblock Ph.D. thesis, Massachusetts Institute of Technology.

\bibitem[{Wang and Williams(2015)}]{wang2015tburton}
Wang, D.; and Williams, B. 2015.
\newblock tBurton: A Divide and Conquer Temporal Planner.
\newblock In \emph{Twenty-Ninth AAAI Conference on Artificial Intelligence}.

\bibitem[{Yu and Williams(2013)}]{yu2013continuously}
Yu, P.; and Williams, B.~C. 2013.
\newblock Continuously relaxing over-constrained conditional temporal problems
  through generalized conflict learning and resolution.
\newblock In \emph{Twenty-Third International Joint Conference on Artificial
  Intelligence}.

\end{thebibliography}
\end{document}